\newcommand{\R}{\mathbb{R}}
\newcommand{\N}{\mathbb{N}}
\newcommand{\Prob}{\mathbb{P}}
\newcommand{\Exp}{\mathbb{E}}
\newtheorem{definition}{Definition}[]
\newtheorem{proposition}{Proposition}[]
\def\BibTeX{{\rm B\kern-.05em{\sc i\kern-.025em b}\kern-.08em
    T\kern-.1667em\lower.7ex\hbox{E}\kern-.125emX}}
\begin{document}

\title{Fast Risk Assessment for Autonomous Vehicles Using Learned Models of Agent Futures}

\newcommand{\CH}[1]{\textcolor{red}{[CH: #1]}}
\newcommand{\AW}[1]{\textcolor{red}{[AW: #1]}}
\newcommand{\AJ}[1]{\textcolor{red}{[AJ: #1]}}
\let\oldemptyset\emptyset
\let\emptyset\varnothing
\author{Allen Wang%
, Xin Huang%
, Ashkan Jasour%
, and Brian C. Williams \thanks{All authors are with the Computer Science and Artificial Intelligence Laboratory (CSAIL), Massachusetts Institute of Technology
        {\tt\small \{allenw, huangxin, jasour, williams\} @ mit.edu}}}

\maketitle

\begin{abstract}
This paper presents fast non-sampling based methods to assess the risk of trajectories for autonomous vehicles when probabilistic predictions of other agents' futures are generated by deep neural networks (DNNs). The presented methods address a wide range of representations for uncertain predictions including both Gaussian and non-Gaussian mixture models for predictions of both agent positions and controls. We show that the problem of risk assessment when Gaussian mixture models (GMMs) of agent positions are learned can be solved rapidly to arbitrary levels of accuracy with existing numerical methods. To address the problem of risk assessment for non-Gaussian mixture models of agent position, we propose finding upper bounds on risk using Chebyshev's Inequality and sums-of-squares (SOS) programming; they are both of interest as the former is much faster while the latter can be arbitrarily tight. These approaches only require statistical moments of agent positions to determine upper bounds on risk. To perform risk assessment when models are learned for agent controls as opposed to positions, we develop \textit{TreeRing}, an algorithm analogous to tree search over the ring of polynomials that can be used to exactly propagate moments of control distributions into position distributions through nonlinear dynamics. The presented methods are demonstrated on realistic predictions from DNNs trained on the Argoverse and CARLA datasets and are shown to be effective for rapidly assessing the probability of low probability events.
\end{abstract}
\section{Introduction}
In order for autonomous vehicles to drive safely on public roads, they need to predict the future states of other agents (e.g. human-driven vehicles, pedestrians, cyclists) and plan accordingly. Predictions, however, are inherently uncertain, so it is desirable to represent uncertainty in predictions of possible future states and reason about this uncertainty while planning. This desire is motivating ongoing work in the behavior prediction community to go beyond single mean average precision (MAP) prediction and develop methods for generating probabilistic predictions \cite{chai2019multipath, rhinehart2018r2p2, lee2017desire,huang2019diversity}. In the most general sense, this involves learning joint distributions for the future states of all the agents conditioned on their past trajectories and other context specific variables (e.g. an agent is at a stop light, lane geometry, the presence of pedestrians, etc). However, learning such a distribution can often be intractable, so current works use a wide variety of different simplified representations for probabilistic predictions. \cite{lee2017desire} trains a conditional Variational Autoencoder (CVAE) to generate samples of possible future trajectories. Other works use generative adversarial networks (GANs) to generate multiple trajectories with probabilities assigned to each of them \cite{huang2019diversity, li2019interaction}. As a discrete alternative, \cite{hong2019rules,bansal2018chauffeurnet} train a DNN to generate a probabilistic occupancy grid map with a probability assigned to each cell. However, such grid-based approaches effectively treat possible agents' trajectories as belonging to a discrete space, while, in reality, agents may be at an uncountable number of points in continuous space. Many recent papers try to account for the continuous nature of uncertainty in space by learning (GMMs) for vehicle positions \cite{chai2019multipath, deo2018multi, hong2019rules} or coefficients of polynomials in $\R^2$ that represent the vehicles' positions \cite{huang2019uncertainty}. Since learning uncertain models for position or pose can also sometimes produce results that are inconsistent with basic kinematics, some recent works develop DNNs that predict future control inputs which are then propagated through a kinematic model to predict future positions \cite{cui2019deep, rhinehart2018r2p2}.

Given a probabilistic prediction, an autonomous vehicle still needs to be able to \textit{rapidly} evaluate the probability of a given plan resulting in a collision or, more generally, a constraint violation. We will refer to this problem as \textit{risk assessment} and it is particularly challenging in the context of autonomous driving as 1) autonomous vehicles need to reason about low probability events to be safer than human drivers and 2) there are hard real time constraints on algorithm latency. Latency is a critical consideration for safety and will be a major consideration motivating the methods presented in this paper. While an algorithm with a latency of, for example, one second would often be acceptable in other robotics applications, it would be unacceptable for an autonomous vehicle traveling at 20 m/s on public roads. This requirement of low latency while retaining the ability to reason about low probability events makes naive Monte Carlo computationally intractable. To address this problem, adaptive and importance sampling methods have been proposed to estimate these probabilities with fewer samples \cite{schmerling2016evaluating, norden2019efficient}. However, such methods still do not usually simultaneously provide guarantees on both latency and error. Their performance can also be highly sensitive to algorithm parameters and proposal distributions.

\indent\textit{Statement of Contributions}: We present fast methods to assess the risk of trajectories for both Gaussian and non-Gaussian position and control models of other agents. In section \ref{sec:risk_assess}, we begin by addressing the case when GMMs are used for agent position predictions. We show this particular case can be reduced to the problem of computing the CDF of a quadratic form in a multivariate Gaussian (QFMVG)- a well-studied problem in the statistics community for which methods exist that can rapidly solve it to arbitrary accuracy. To address the more general case when potentially non-Gaussian mixture models are used for agent position predictions, we apply statistical moment-based approaches to determine upper bounds on risk, which we will refer to as \textit{risk bounds}. Namely, we propose using Chebyshev's Inequality and a sums-of-squares (SOS) program that can be seen as a generalization of Chebyshev's Inequality; the former is faster, while the latter can provide arbitrarily tight risk bounds. These moment-based approaches have the feature of being \textit{distributionally robust}, producing risk bounds that are true for all possible distributions that take on the value of the given moments.
To address uncertain models for controls, in Section \ref{sec:moment_prop}, we develop \textit{TreeRing}, a novel algorithm analogous to tree search, but over the ring of polynomials. Given a polynomial stochastic system, \textit{TreeRing} can find polynomial expressions for moments of position in terms of moments of control random variables. This enables the application of our non-Gaussian position risk assessment methods to the problem of risk assessment when models are learned for agent controls. Figure \ref{fig:control_framework_illustration} illustrates our framework in this case. In Section \ref{sec:experiments}, we demonstrate our methods on realistic predictions generated by DNNs trained on the Argoverse and CARLA datasets \cite{chang2019argoverse, Dosovitskiy17}. Source code can be found at \url{https://github.com/allen-adastra/risk_assess}.
\begin{figure}[!b]
    \vspace*{-5mm}
    \centering
    \includegraphics[width=0.8\linewidth]{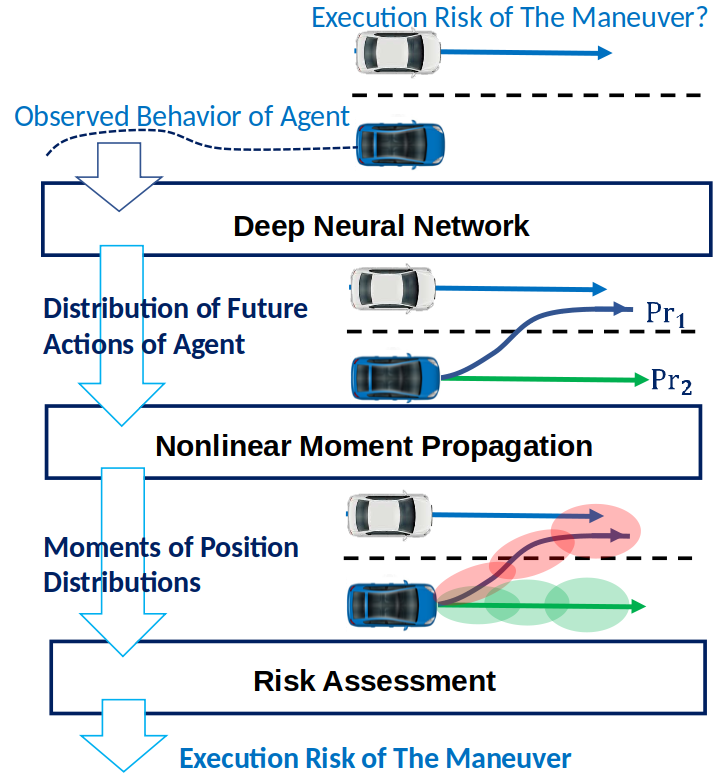}
    \caption{Illustration of our risk assessment framework when control distributions are used for predictions. When position distributions are used, the nonlinear moment propagation step is skipped.}
    \label{fig:control_framework_illustration}
\end{figure}

\section{Notation}
Let $\mathcal{S}^n_{++}$ denote the set of $n\times n$ positive definite matrices. For any matrix $Q\in\mathcal{S}^n_{++}$ and vector $\mathbf{x}\in\R^n$, let $Q(\mathbf{x}):=\mathbf{x}^TQ\mathbf{x}$. Let $Q_{ij}$ denote the element in the $i_{th}$ row and $j_{th}$ column of $Q$. For any $\theta\in\R$, let $R(\theta)$ be the 2D rotation matrix parameterized by $\theta$. For a vector $\mathbf{x}\in\R^n$ and multi-index $\alpha\in\mathbb{Z}^n_+$, let $\mathbf{x}^\alpha = \prod_{i=1}^nx_i^{\alpha_i}$. Let $\R[\mathbf{x}]$ denote the ring of polynomials in $\mathbf{x}$ over $\R$. For $n\in\N$, let $[n] = \{k\in\N : k\leq n\}$. A polynomial $s(x)\in\R[x]$ is said to be sums-of-squares (SOS) if for some $l\in\N$, $\exists h_i(x)\in\R[x]$ for $i\in [l]$ s.t. $s(x) = \sum_{i=1}^l h_i(x)^2$. For a random vector $\mathbf{w}$ and any $d\in\N$, let $\mu_{\mathbf{w}_t}, \Sigma_{\mathbf{w}_t}$ denote its mean vector and covariance matrix respectively, and $\Phi_\mathbf{w}$ denote its characteristic function. For any set $S$, let $\mathcal{P}(S)$ denote the power set of $S$ with the empty set removed, $|S|$ denote the cardinality of $S$, and $S^n$ denote the $n$\textit{-ary} Cartesian power. For a vector valued function $f$, $f_i$ denotes the $i_{th}$ component of $f$.
\section{Problem Statement}
We define \textit{risk} as the probability of an agent entering an ellipsoid around the ego vehicle. Thus, we are interested in computing the probability of other agents entering:
\begin{align}
    \left\{\mathbf{x}\in\R^2 : Q(\mathbf{x})\leq 1\right\}, \quad Q\in\mathcal{S}_{++}^2
\end{align}
We argue ellipsoids are a useful representation as: 1) they can be fit relatively tightly to the profiles of vehicles, and 2) the sizes of both the ego vehicle and agent can be accounted for by properly scaling the size of the ellipsoid around the vehicle. Throughout the paper, agent positions at each time step are always defined in the frame of the planned future poses of the ego vehicle unless stated otherwise; Section \ref{sec:change_frame} shows how moments of distributions can be expressed in different frames. Given this formulation, the ellipsoid is parameterized by a constant matrix $Q\in\mathcal{S}^2_{++}$ in the ego vehicle frame. In practice, multiple ellipsoids can be defined around the vehicle and an appropriate one selected at run-time. We restrict our focus to the single agent case and note that the risk in a multi-agent setting can be upper bounded by summing the risk associated with each agent.

If $\mathbf{x}_t = [x_t, y_t]^T$ is some random vector for the position of the agent at time $t$, then the risk associated with an agent across the whole $T$ step time horizon is:
\begin{align}
    \mathcal{R} &:= \Prob\left(\bigcup_{t=1}^{T}\left\{Q(\mathbf{x}_t)\leq 1\right\}\right)\label{eq:agent_risk}
\end{align}
 By the inclusion-exclusion principle, the probability $(\ref{eq:agent_risk})$ can be computed as the sum of the probabilities of the marginal events and the probabilities of all possible intersections of events:
\begin{align}
\mathcal{R} = \sum_{J\in\mathcal{P}([T])} (-1)^{|J|+1}\Prob\left(\bigcap_{j\in J} \{Q(\mathbf{x}_{j})\leq 1\}\right)
\end{align}
In many works, the random variables are assumed to be independent across time or can be made to be independent across time by conditioning on a discrete mode \cite{chai2019multipath,deo2018multi,hong2019rules}. If there is dependence across time, one would need the conditional distributions of the events which require additional information to be learned. As most work on behavior prediction currently assumes independence across time, this paper restricts its focus to the time independent case, and so:
\begin{align}
    \mathcal{R} = 1 - \prod_{t\in[T]} \left(1 - \Prob(Q(\mathbf{x}_t)\leq 1)\right)
\end{align}
Thus, the problem of risk assessment along the trajectory can be solved by computing the marginals at each time step $t$, so the rest of the paper restricts its focus to the marginals.
\section{Risk Assessment}\label{sec:risk_assess}
In this section, we present solutions for both Gaussian and non-Gaussian risk assessment when moments of the random vector for agent position $\mathbf{x}_t$ are known. We begin by addressing the problem of determining moments of agent positions in different frames to account for the ego vehicles planned trajectory. We then present our solution for the GMM case using numerical approximations of the CDFs of QFMVGs. To address the non-Gaussian case, we present methods based off Chebyshev's Inequality and SOS programming. We assume basic knowledge of SOS programming;
We refer the reader to \cite{parrilo2003semidefinite,rarnop} for an overview of SOS programming and \cite{jasour2019RiskMap,jasour2019Tube,jasour2018moment,jasour2016PhD,jasour2015SIAM,rarnop} for moment-SOS based planning.
Throughout this section, we assume the necessary moments of $\mathbf{x}_t$ are known.

\subsection{Changing Frames}\label{sec:change_frame}
Predictions are usually given in a global frame, so this section provides a method for transforming the global frame distribution moments into the ego vehicle frame. More generally, we are concerned with computing moments of $\mathbf{x}_t$ in a new frame offset by $\mathbf{v}\in\R^2$ and rotated by $-\theta\in\R$. As shown in the appendix, if $\mathbf{x}_t$ is a mixture model, its moments can be computed in terms of moments of its components, so, in this section, let $\mathbf{x}_t$ be a component of a mixture model. We propose only translating the moments and then accounting for the rotation by using $Q^* = R(\theta)^TQR(\theta)$ instead of $Q$. The rotation can be accounted for by using $Q^*$ instead of $Q$ because:
\begin{align}
    \mathbf{x}_t^TQ^*\mathbf{x}_t &= \mathbf{x}_t^TR(\theta)^TQR(\theta)\mathbf{x}_t\\&= (R(\theta)\mathbf{x}_t)^TQ(R(\theta)\mathbf{x}_t)
\end{align}
The translated moments can be computed by applying the binomial theorem to $(\mathbf{x}_t - \mathbf{v})^n$ (here, the power is applied element-wise). Note that applying the binomial theorem to $(\mathbf{x}_t - \mathbf{v})^n$ requires moments of $\mathbf{x}_t$ up to order $n$.

\subsection{Risk Assessment for GMM Position Models}
In this section, we provide a method to solve the risk assessment problem when the uncertain prediction is represented as a sequence of GMMs, $\mathbf{x}_t$, of the agents position with discrete modes determined by the Multinoulli $Z_t$. Many works currently learn GMMs for vehicle position as they express both multi-modal and continuous uncertainty \cite{chai2019multipath, deo2018multi, hong2019rules}. As shown in Figure~\ref{fig:risk_assess_example}, they provide an intuitive representation of uncertainty in both the drivers high level decisions and low level execution.
With time independence, the risk is:
\begin{align}
    \sum_{z = 1}^n\left(1 - \prod_{t\in[T]} 1 - \Prob(Q(\mathbf{x}_t)\leq 1 : Z_t=z)\right)\Prob(Z_t = z)
\end{align}
\begin{figure}[!t]
    \centering
    \includegraphics[width=0.95\linewidth]{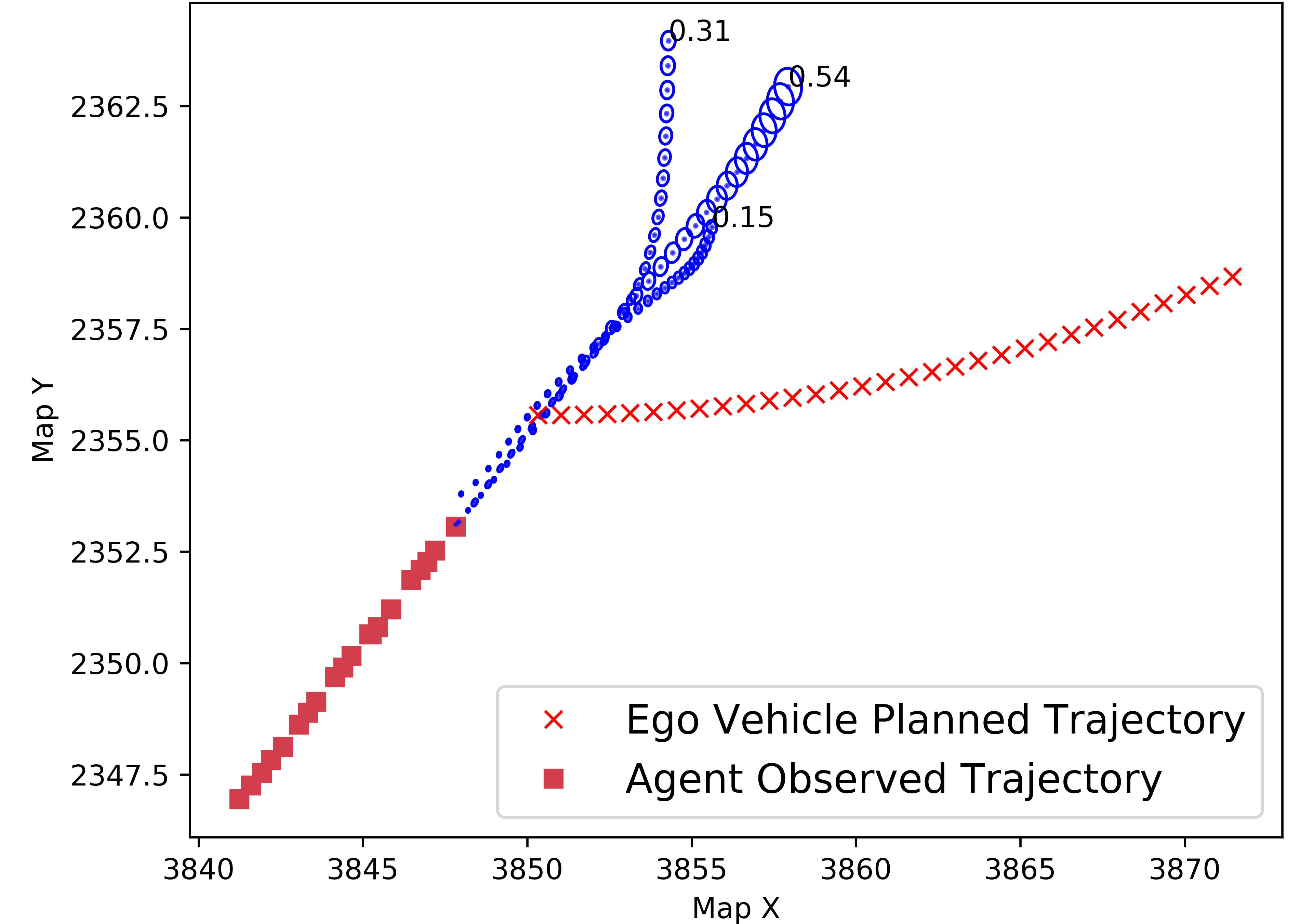}
    \caption{An example risk assessment scenario. One standard deviation confidence ellipses (in blue) of a multi-modal GMM prediction are shown with mode probabilities. The observed agent trajectory and planned ego vehicle trajectory are also shown in red with different markers.}
    \label{fig:risk_assess_example}
    \vspace*{-5mm}
\end{figure}
Note that the above expression can be easily modified for the case when there is a single Multinoulli random variable that is constant across all time, an assumption used in, for example, \cite{chai2019multipath}. The probabilities $\Prob(Z_t = z)$ are learned parameters of GMMs, so the problem of risk assessment can be solved by computing $\Prob\left(Q(\mathbf{x}_t)\leq 1 : Z_t = z\right)$ for each agent, time step, and mode. Note that this is exactly the CDF of $Q(\mathbf{x}_t)$ conditioned on $Z_t = z$ which is a quadratic form in a multivariate Gaussian (QFMVG). Unfortunately, there does not exist a known closed form solution to exactly evaluate the CDF of QFMVGs, but fast approximation methods with bounded errors have been studied within the statistics community \cite{liu2009new, solomon1977distribution, kotz1967series, duchesne2010computing, imhof1961computing}. Several of these methods have been implemented in the R package CompQuadForm \cite{de2017compquadform}. Of particular interest is the method of Imhof, which produces results with bounded approximation error by numerical inversion of the characteristic function of the QFMVG \cite{imhof1961computing}. A faster, but less accurate, alternative is the method of Liu-Tang-Zhang which involves approximating the CDF of the QFMVG with the CDF of a non-central chi square distribution with parameters chosen to minimize the difference in kurtosis and skew between the approximate and target distributions \cite{liu2009new}.
\subsection{Non-Gaussian Risk Assessment with Chebyshevs Inequality}
As a consequence of the one-tailed Chebyshev's Inequality, for any measurable function $g$, whenever $\mathbb{E}[g(\mathbf{x}_t)] > 0$, we have that:
\begin{align}
    \Prob(g(\mathbf{x}_t)\leq 0)&\leq \frac{\Exp[g(\mathbf{x}_t)^2] - \Exp[g(\mathbf{x}_t)]^2}{\Exp[g(\mathbf{x}_t)^2]}
\end{align}
That is, the first two moments of $g(\mathbf{x}_t)$ are sufficient to establish a bound on the risk that the constraint $g(\mathbf{x}_t)\leq 0$ is violated. We note that the requirement $\mathbb{E}[g(\mathbf{x}_t)] > 0$ is not particularly restrictive because $\mathbb{E}[g(\mathbf{x}_t)]\leq 0$ means the average case involves collision, thus corresponding to what is usually an unacceptable level of risk.
\subsubsection{Applying Chebyshev's Inequality to the Quadratic Form}\label{chebyshev_quad_form}
To apply Chebyshev's inequality to $\Prob(Q(\mathbf{x}_t) - 1\leq 0)$, we would need the first two moments of $Q(\mathbf{x}_t)-1$ which can be expressed in terms of the first two moments of $Q(\mathbf{x}_t)$. The first moment can be expressed in terms of the mean vector and covariance matrix of $\mathbf{x}_t$ \cite{provost1992quadratic}:
\begin{align}
    \mathbb{E}[Q(\mathbf{x}_t)] = \text{Tr}(Q\Sigma_{\mathbf{x}_t}) + \mu_{\mathbf{x}_t}^TQ\mu_{\mathbf{x}_t}
\end{align}
We can determine an expression for $\mathbb{E}[(Q(\mathbf{x}_t)^2]$ via an alternate representation for the quadratic form:
\begin{align}
    \mathbb{E}[Q(\mathbf{x}_t)^2] &= \sum_{(i,j,k,l)\in [2]^4}Q_{ij}Q_{kl}\mathbb{E}\left[x_{t_i}x_{t_j}x_{t_k}x_{t_l}\right]
\end{align}
Thus, to compute the second moment of $Q(\mathbf{x}_t)$, we would need the moments of $\mathbf{x}_t$ of order up to four.
\subsubsection{Conservative Approximation with Half-Spaces}
It's possible to reduce the order of the moments that need to be propagated to two by instead approximating the ellipsoid as the intersection of $n_h$ half-spaces parameterized by $\mathbf{a}_i\in\R^2$ and $b_i\in\R$. The approximated set is thus:
\begin{align}
\mathcal{X}_{Approx} = \cap_{i=1}^{n_h}\{\mathbf{x}\in\mathbb{R}^2 : \mathbf{a}_i^T\mathbf{x} + b_i\leq 0\}
\end{align}
Since the probability of any individual event is greater than the probability of the intersection of events, we have that:
\begin{align}
    \Prob(\cap_{i=1}^{n_h} \{\mathbf{a}_i^T\mathbf{x}_t + b_i\leq 0\})\leq \min_{i\in[n_h]} \Prob(\mathbf{a}_i^T\mathbf{x}_t + b_i \leq 0)
\end{align}
So if we determine an upper bound on the probability of each $\Prob(\mathbf{a}_i^T\mathbf{x}_t + b_i \leq 0)$ with Chebyshev's Inequality, the minimum of the Chebyshev bounds will be an upper bound on our risk. Since $\mathbf{a}_i^T\mathbf{x}_t + b_i$ is an affine transformation of $\mathbf{x}_t$, its mean and variance can be expressed with the mean vector and covariance matrix of $\mathbf{x}_t$.
\subsection{Non-Gaussian Risk Assessment with SOS Programming}\label{sos_subsection}
When tighter risk bounds are desired than those obtained via Chebyshev's Inequality, for any measurable function $g$,  an \textit{univariate} SOS program can be used to upper-bound $\Prob(g(\mathbf{x}_t)\leq 0)$ -- the SOS program is univariate in the sense that it searches for a polynomial in a single indeterminant, not in the sense that there is only one decision variable \cite{jasour2018moment}. The fact that the SOS program is univariate is significant because the key disadvantages of SOS, scalability and conservatism, are not as limiting for univariate SOS because: 1) the number of decision variables in the resulting SDP scales quadratically w.r.t. the order of the polynomial we are searching for and 2) the set of non-negative univariate polynomials is equivalent to the set of univariate SOS polynomials, allowing univariate SOS to explore the full space of possible solutions.

We begin by noting that the probability of constraint violation is equivalent to the expectation of the indicator function on the sub-level set of $g$:
\begin{align}
    \Prob(g(\mathbf{x}_t)\leq 0) = \mathbb{E}[\mathbf{1}_{g(\mathbf{x}_t)\leq 0}]
\end{align}
The expectation of the indicator function, however, is not necessarily easily computable. To solve this problem, we find some polynomial with a more easily computable expectation that upper bounds the indicator function. If we can find some univariate polynomial, $p: \R\rightarrow\R$ of order $d$ in some indeterminant $x\in\R$ with coefficients $c_k, k = 0,...,d$ that upper bounds the indicator function, then clearly the following implication holds by substitution:
\begin{align}
    p(x):= \sum_{k=0}^d c_k x^k\geq \mathbf{1}_{x\leq 0}\Rightarrow \sum_{k=0}^d c_kg(\mathbf{x}_t)^k\geq\mathbf{1}_{g(\mathbf{x}_t)\leq 0}
\end{align}
Given the coefficients $c_k$, if we apply the expectation w.r.t. the density function of $\mathbf{x}_t$ to both sides, then we can reduce the problem of finding an upper bound on $\Prob(g(\mathbf{x}_t)\leq 0)$ to that of computing moments of the random variable $g(\mathbf{x}_t)$:
\begin{align}
    \sum_{k=0}^dc_k\mathbb{E}[g(\mathbf{x}_t)^k]\geq \mathbb{E}[\mathbf{1}_{g(\mathbf{x}_t)\leq 0}] = \Prob(g(\mathbf{x}_t)\leq 0)
\end{align}
The moments of $g(\mathbf{x}_t)$, in turn, are computable in terms of moments of $\mathbf{x}_t$ by expanding out the polynomial power and applying the linearity of expectation. For example, if $g(\mathbf{x}_t) = x_t^2 + y_t^2$, then:
\begin{align}
    \mathbb{E}[g(\mathbf{x}_t)^3] = \mathbb{E}[x_t^6] + 3\mathbb{E}[x_t^4y_t^2] + 3\mathbb{E}[x_t^2y_t^4] + \mathbb{E}[y_t^6]
\end{align}
In the case that $\mathbf{x}_t$ is a multivariate Gaussian, the higher order central moments and central cross moments can be computed in close form given the mean vector and covariance matrix. In the non-Gaussian case, there may not be a convenient way to determine the desired moments, but we note that the moments can often be computed using moment generating functions or sampling based approaches. In this section, we assume that we know the necessary moments of $\mathbf{x}_t$ to compute $\mathbb{E}[g(\mathbf{x}_t)^k], \forall k\in[d]$. We also normalize the moments, as doing so improves the numerical conditioning of the problem \footnote{normalization is valid because $\Prob(X\leq 0) = \Prob(cX\leq 0)$ for $c>0$}.

Now consider the following univariate SOS program in the indeterminant $x$ which can search for the polynomial which minimizes the upper bound on risk.
\begin{subequations}
\begin{alignat}{2}
    \min_{p, s_1, s_2}\quad &\sum_{k=0}^d c_k\mathbb{E}[g(\mathbf{x}_t)^k]\label{opt_obj}\\
    & p(x) - 1 = s_1(x) - xs_2(x) \label{non_neg_on_negative}\\
    & p(x), s_1(x), s_2(x) \quad SOS
\end{alignat}
\end{subequations}
If the order of the polynomial is chosen to be $d = 2n$ for some $n\in\N$, then we should have that $\text{deg}(s_1) = d$ and $\text{deg}(s_2) = d - 2$. If $d = 2n + 1$ for some $n\in\N$, then we should have that $\text{deg}(s_1) = 2n$ and $\text{deg}(s_2) = 2n$. Note that the constraint (\ref{non_neg_on_negative}) enforces:
\begin{align}
    p(x) \geq 1\quad \forall x\in(-\infty, 0] \label{noneg_on_negative_condition}
\end{align}
And since $p(x)$ is constrained to be SOS, it is also globally non-negative so $p(x)\geq\mathbf{1}_{x\leq 0}, \forall x\in\R$.
Thus, we can see that the optimal objective value of this SOS program yields an upper bound on $\Prob(g(\mathbf{x}_t)\leq 0)$.

\section{Moment Propagation}\label{sec:moment_prop}
While directly learning distributions for agents future positions can be an effective strategy, one major disadvantage is it can produce physically unrealistic predictions. \cite{cui2019deep, rhinehart2018r2p2} address this by learning distributions for control inputs and then propagating samples through a kinematic model. While the Kalman filter and its variants, such as the extended and unscented Kalman filters, can be used to propagate mean and covariance, they are not exact and do not immediately apply to higher order moments \cite{wan2000unscented,kalman1961new,julier2004unscented}.

In this section, we provide an approach for nonlinear moment propagation that can, in principle, work for moments up to arbitrary order \cite{jasour2016PhD}. Given a nonlinear model and a random vector for control, $\mathbf{w}_t$, this section is concerned with the problem of computing statistical moments of $\mathbf{x}_t$ s.t. the non-Gaussian risk assessment methods presented in Section \ref{sec:risk_assess} can be applied. More precisely, we are looking for moments of the form $\mathbb{E}[x_t^\alpha y_t^\beta]$ where $\alpha,\beta\in\N$. The only requirements we impose on $\mathbf{w}_t$ is that its entries have 1) bounded moments and 2) computable characteristic functions. In the case that mixture models are used, we show in the appendix that it is sufficient for the components of the mixture models to satisfy the requirements as the moments and characteristic functions of mixtures of random variables can be computed in terms of those of their components. We use a stochastic version of the discrete-time Dubin's car to both demonstrate the general approach and to address the problem of agent risk assessment:
\begin{subequations}
\begin{align}
    x_{t+1} &= x_t + v_t\cos(\theta_t)\\
    y_{t+1} &= y_t + v_t\sin(\theta_t)\\
    v_{t+1} &= v_t + w_{v_t}\\
    \theta_{t+1} &= \theta_t + w_{\theta_t}
\end{align}
\label{eq:dynamics}
\end{subequations}
Above, the control vector is $\mathbf{w}_t = [w_{v_t}, w_{\theta_t}]$ where $w_{v_t}$ and $w_{\theta_t}$ are random variables describing the agent's acceleration and steering at time $t$ and are assumed to be independent. $\mathbf{x}_t = [x_t, y_t]$ is the position of some reference point on the agent in a fixed frame, $v_t$ is its speed, and $\theta_t$ is the angle of its velocity vector with respect to the fixed frame. The time steps $\Delta t$ for discretization are omitted for brevity; the values of the variables can simply be scaled accordingly. 
\subsection{Motivating Example}
To motivate $\textit{TreeRing}$, we begin by showing how the dynamics of the moment $\mathbb{E}[x_{t+1}y_{t+1}]$ for the system $(\ref{eq:dynamics})$ can be found manually. $\textit{TreeRing}$ is essentially an automated version of this process. By substituting the equations $(\ref{eq:dynamics})$ in and applying the linearity of expectation, we arrive at the dynamics of our moment:
\begin{multline}
\mathbb{E}[x_{t+1}y_{t+1}] = \mathbb{E}[x_{t} y_{t}] + \mathbb{E}[v_{t}^{2} \sin{\left(\theta_{t} \right)} \cos{\left(\theta_{t} \right)}] \\+ \mathbb{E}[x_{t} v_{t} \sin{\left(\theta_{t} \right)}] + \mathbb{E}[y_{t}v_{t}  \cos{\left(\theta_{t} \right)}]
\end{multline}
Notice above that the term $\mathbb{E}[x_ty_t]$ shows up; if we find some way to compute the other three terms, we will arrive at an update relation with which we can compute $\mathbb{E}[x_ty_t]$ recursively. Turning our attention to the second term, we have that $v_t$ is independent of $\theta_t$ since we assumed $w_{v_t}$ and $w_{\theta_t}$ are independent at each time step. So the second term above can be factored by independence:
\begin{align}
    \mathbb{E}[v_t^2\sin(\theta_t)\cos(\theta_t)] = \mathbb{E}[v_t^2]\mathbb{E}[\sin(\theta_t)\cos(\theta_t)]
\end{align}
In the appendix, we show how the quantities $\mathbb{E}[v_t^2]$, which is the sum of independent random variables, and $\mathbb{E}[\sin(\theta_t)\cos(\theta_t)]$ can be computed in terms of the characteristic functions of $w_{v_t}$ and $w_{\theta_t}$, which we assumed are known. In the last two terms, there are dependencies between the variables, so there is not a clear way to factor them down into computable moments. Our main idea is to derive the dynamics of $\mathbb{E}[x_{t+1}v_{t+1}\sin(\theta_{t+1})]$ and $\mathbb{E}[y_{t+1}v_{t+1}\cos(\theta_{t+1})]$ by substituting in the equations (\ref{eq:dynamics}) much in the same way we did so for $\mathbb{E}[x_{t+1}y_{t+1}]$. We then simulate the dynamics of these additional moments in addition to $\mathbb{E}[x_{t+1}y_{t+1}]$; in a sense, we are producing a new dynamical system in terms of moments. In this case, recursively repeating this process of adding new moments to our dynamics produces a closed form set of equations that can recursively compute $\mathbb{E}[x_{t+1}y_{t+1}]$ in terms of moments of an initial state distribution and uncertainty at each time step. This process, however, is tedious and is easily subject to human error, especially for larger expressions. To address these issues, we developed $\textit{TreeRing}$ to algorithmically derive such expressions for moment propagation.
\subsection{TreeRing: An Algorithm for Moment Propagation}
\textit{TreeRing} is an algorithm for polynomial stochastic systems, but, in many cases, we are interested in a nonlinear system $f$ that is not polynomial, as is the case for (\ref{eq:dynamics}). However, nonlinear systems can always be approximated as a polynomial system by applying taylor expansions. In the case of (\ref{eq:dynamics}), the system can be made to be polynomial by applying the change of variables $c_t = \cos(\theta_t)$, $s_t = \sin(\theta_t)$, $c_{w_{t}} = \cos(w_t)$, and $s_{w_t} = \sin(w_{\theta_t})$:
\begin{subequations}
\begin{align}
    x_{t + 1} &= x_t + v_tc_t\\
    y_{t + 1} &= y_t + v_ts_t\\
    v_{t + 1} &= v_t + w_{v_t}\\
    c_{t + 1} &= c_tc_{w_t} - s_ts_{w_t}\\
    s_{t + 1} &= s_tc_{w_t} + c_ts_{w_t}
\end{align}\label{eq:dynamics_poly}
\end{subequations}
above, update relations for $c_t$ and $s_t$ were arrived at by using the trigonometric sums formulas to expand out $\cos(\theta_t + w_{\theta_t})$ and $\sin(\theta_t + w_{\theta_t})$ into the expressions shown.

Throughout this section, for any set of multi-indices, $\mathcal{Z}\subset\mathbb{N}^n$, let $\mathcal{Z}(\mathbf{b_t}) = \{\mathbb{E}[\mathbf{b_t}^\xi] : \xi\in\mathcal{Z}\}$ denote the set of corresponding moments. Given any $\xi\in\mathbb{N}^{n_b}$, suppose we want to be able to compute $\mathbb{E}[\mathbf{b}_t^\xi]$ for all $t$ in some finite horizon. The key idea is to find some set of multi-indices corresponding to moments $\mathcal{Z}\subset\mathbb{N}^{n_b}$ and some set of scalar-valued polynomials $\mathcal{F}\subset\R[\mathbf{b}_t]$ s.t. $\xi\in\mathcal{Z}$ and
$\forall m\in\mathcal{Z}(\mathbf{b}_{t+1})$, one of the following is true:
\begin{enumerate}
    \item The value of $m$ is known
    \item $\exists f\in\mathcal{F}$ s.t. $m = f\left(\mathcal{Z}(\mathbf{b}_t)\right)$
\end{enumerate}
That is, every moment at time $t + 1$ is either known or is a polynomial in the moments at time $t$. Thus, given $\mathcal{Z}(\mathbf{b}_t)$, we can directly compute $\mathcal{Z}(\mathbf{b}_{t + 1})$ with basic mathematical operations (addition, multiplication, exponentiation). By induction, if $\mathcal{Z}(\mathbf{b}_0)$ is known, as is the case when the moments of the initial distribution are known or the initial state is deterministic, we can compute all the moments for the entire time horizon. The key function of $\textit{TreeRing}$ is to search for the appropriate set of moments, $\mathcal{Z}$ and corresponding set of dynamics equations $\mathcal{F}$.

\indent An important property of the ring of polynomials is that it is closed under multiplication and addition. Thus, if we wanted to express the moment $\mathbb{E}[\mathbf{b}_{t+1}^\xi]$, for some $\xi\in\mathbb{N}^{n_b}$, in terms of quantities at time $t$, we have that $\exists p\in\mathbb{R}[\mathbf{b}_t]$ s.t:
\begin{align}
    \mathbb{E}[\mathbf{b}_{t+1}^\xi] = \mathbb{E}[p(\mathbf{b}_t)]
\end{align}
Letting $\mathcal{A}\subset\mathbb{N}^{n_b}$ denote the set of monomial exponents of the terms of $p$ and $C_\mathcal{A} = \{c_\alpha\in\R : \alpha\in\mathcal{A}\}$ denote the set of coefficients, we have by the linearity of expectation that:
\begin{align}
    \mathbb{E}[\mathbf{b}_{t+1}^\xi] = \sum_{\alpha\in\mathcal{A}} c_\alpha\mathbb{E}[\mathbf{b}_t^\alpha]\label{eq:update_expansion}
\end{align}
Both $\mathcal{A}$ and $C_\mathcal{A}$ can be found by using compute algebra packages such as SymPy \cite{meurer2017sympy}. So we already have that $\mathbb{E}[\mathbf{b}_{t+1}^\xi]$ can be expressed as a polynomial in moments at time $t$. To further reduce the problem, we factor each $\mathbb{E}[\mathbf{b}_t^\alpha]$ into the product of moments of smaller expressions (for example, this is clearly doable in the case when the variables in $\mathbf{b}_t$ are all independent of each other). To provide a computational method to search for these factorization, we introduce the following definitions which are used in Proposition \ref{prop:moment_to_moment}. Proposition $\ref{prop:moment_to_moment}$ essentially states that factorization can be found by simple search algorithms on the dependence graph. In TreeRing, once a decomposition has been found, moments of the form $\mathbf{b}_{t}^{\alpha[c]}$ that are not already accounted for in $\mathcal{Z}$ are simply added. The expand subroutine is then called with $\alpha[c]$ input in the first argument. Algorithm \ref{alg:create_update_relations} summarizes the algorithm.
\begin{definition}\normalfont
Let $V_\mathbf{b}$ be the set of variables in $\mathbf{b}$, $n_b = |V_\mathbf{b}|$, and $E_{\mathbf{b}}$ be the set of undirected edges s.t. $(b_i, b_j)\in E_{\mathbf{b}}$ i.f.f. $b_i$ and $b_j$ are dependent. Then:
\begin{itemize}
    \item The graph $G_{\mathbf{b}} = (V_\mathbf{b}, E_\mathbf{b})$ is the \textit{dependence graph} of $\mathbf{b}$.
    \item Given any multi-index $\alpha\in\mathbb{N}^{n_b}$, \textit{the subgraph of $G_\mathbf{b}$ induced by $\alpha$}, denote it $G_\mathbf{b}[\alpha] = (V_\mathbf{b}[\alpha], E_\mathbf{b}[\alpha])$, is the sub-graph of $G_\mathbf{b}$ with the variables with non-zero power in $\mathbf{b}^\alpha$ as its vertex set.
    \item Let \text{Comps}$\left(G_\mathbf{b}\right)$ denote the \textit{set of vertices of connected components of $G_\mathbf{b}$}.
    \item For any component $c\in\text{Comps}(G_\mathbf{b})$, let $\alpha[c]\in\mathbb{N}^{n_b}$ denote the multi-index s.t. the $i_{th}$ element of $\alpha[c]$ equals $\alpha_i$ if its corresponding $v\in V_\mathbf{b}$ is in $c$ and zero otherwise.
\end{itemize}
\end{definition}
\begin{proposition}
\normalfont Given a vector of variables $\mathbf{b}$ with dependence graph $G_\mathbf{b}$ and given a multi-index $\alpha\in\mathbb{N}^{n_b}$:
\begin{align}
    \mathbb{E}[\mathbf{b}^\alpha] = \prod_{c\in\text{Comps}(G_\mathbf{b}[\alpha])}\mathbb{E}\left[\mathbf{b}^{\alpha[c]}\right]
\end{align}\label{prop:moment_to_moment}
\end{proposition}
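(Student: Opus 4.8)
The plan is to prove the identity in two stages: first a purely combinatorial/algebraic decomposition of the monomial $\mathbf{b}^\alpha$ along the connected components of $G_\mathbf{b}[\alpha]$, and then a probabilistic factorization of the expectation that uses the independence encoded by the absence of cross-component edges.

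First I would show that the multi-indices $\{\alpha[c]\}_{c\in\text{Comps}(G_\mathbf{b}[\alpha])}$ sum to $\alpha$. By the definition of $G_\mathbf{b}[\alpha]$, its vertex set is exactly $\{v_i : \alpha_i > 0\}$, and the connected components partition this vertex set. Hence every index $i$ with $\alpha_i > 0$ lies in exactly one component $c$, for which $\alpha[c]_i = \alpha_i$, while $\alpha[c']_i = 0$ for every other component $c'$; every index with $\alpha_i = 0$ contributes $0$ to each $\alpha[c]$. Therefore $\sum_c \alpha[c] = \alpha$ componentwise, and since the ring of polynomials is closed under multiplication, $\mathbf{b}^\alpha = \mathbf{b}^{\sum_c \alpha[c]} = \prod_c \mathbf{b}^{\alpha[c]}$, where the factors have pairwise disjoint supports.

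Next I would push the expectation through the product. The variables appearing in two distinct factors $\mathbf{b}^{\alpha[c]}$ and $\mathbf{b}^{\alpha[c']}$ lie in different connected components, so by construction of $G_\mathbf{b}$ there is no edge between them; reading the dependence graph as a faithful encoding of the independence structure, the variable groups indexed by distinct components are mutually independent, and each $\mathbf{b}^{\alpha[c]}$ is a monomial function of its own group. Mutual independence of the groups then yields
\begin{align}
\mathbb{E}[\mathbf{b}^\alpha] = \mathbb{E}\Big[\prod_{c\in\text{Comps}(G_\mathbf{b}[\alpha])}\mathbf{b}^{\alpha[c]}\Big] = \prod_{c\in\text{Comps}(G_\mathbf{b}[\alpha])}\mathbb{E}[\mathbf{b}^{\alpha[c]}],
\end{align}
which is the claimed identity. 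A clean way to make the last equality rigorous is induction on the number of components, peeling off one component at a time and invoking independence of that group from the union of the remaining groups.

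The main obstacle is precisely this independence step. The dependence graph is defined through \emph{pairwise} dependence, and in general pairwise independence does not imply mutual independence, so factorization across more than two components is not automatic from the edge definition alone. I would close this gap by making explicit that the dependence graph represents the genuine factorization structure of the joint law -- which holds in our setting because the entries of $\mathbf{b}$ are built from a collection of mutually independent primitive sources (the independent per-step controls and the initial condition), so that two vertex sets with no connecting edge share no common source and are therefore independent \emph{as groups}, not merely pairwise. Stating this modeling assumption is what licenses the group-wise factorization and bridges pairwise and mutual independence.
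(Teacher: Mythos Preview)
Your proposal is correct and follows essentially the same route as the paper: first observe that $\sum_c \alpha[c] = \alpha$ so that $\mathbf{b}^\alpha = \prod_c \mathbf{b}^{\alpha[c]}$, and then factor the expectation using the independence implied by the absence of edges between distinct components. If anything, your treatment is more careful, since you explicitly flag and address the pairwise-versus-mutual independence gap that the paper's proof simply glosses over.
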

\begin{proof}
First, note that $\sum_{c\in\text{Comps}(G_\mathbf{b}[\alpha])}\alpha[c] = \alpha$, so we have that:
\begin{align}
    \mathbb{E}[\mathbf{b}^\alpha] = \mathbb{E}\left[\prod_{c\in\text{Comps}(G_\mathbf{b}[\alpha])}\mathbf{b}^{\alpha[c]}\right]
\end{align}
Letting $c\in\text{Comps}(G_\mathbf{b}[\alpha])$, every node in $c$ is independent of every node in $V_\mathbf{b}[\alpha]/c$ as the definition of connected components does not allow for the existence of edges from $c$ to $V_\mathbf{b}[\alpha]/c$. Thus, the product above can be moved outside of the expectation operator.
\end{proof}
\begin{algorithm}
\caption{TreeRing}\label{alg:create_update_relations}
\begin{algorithmic}[1]
\Procedure{Expand}{$\xi$, $\mathbf{b}$, $g$, $G_\mathbf{b}$, $\mathcal{Z}$, $\mathcal{F}$}
    \State $\mathcal{A}, C_\mathcal{A} \leftarrow$ Represent $\mathbf{b}^\xi$ as a polynomial in $\mathbf{b}$ using $g$
    \State Add $\xi$ to $\mathcal{Z}$ and add $\mathcal{A}, \mathcal{C}_\mathcal{A}$ to $\mathcal{F}$
    \ForAll{$\alpha\in\mathcal{A}$}
        \ForAll{$c\in\text{Comps}(G_{\mathbf{b}}[\alpha])$}
            \If{$\alpha[c]\notin\mathcal{Z}$}
                \State Expand($\alpha[c]$, $\mathbf{b}$, $g$, $G_\mathbf{b}$, $\mathcal{Z}$, $\mathcal{F}$)
            \EndIf
        \EndFor
    \EndFor
\EndProcedure
\end{algorithmic}
\end{algorithm}
\subsection{Solving the Motivating Problem with TreeRing}
We return to the motivating problem of deriving an update relation for $\mathbb{E}[x_ty_t]$ by using \textit{TreeRing}. We begin by transforming (\ref{eq:dynamics}) into the polynomial system (\ref{eq:dynamics_poly}) by making the substitutions 

The set of indeterminants, without the time index, is chosen to be:
\begin{align}
    V_\mathbf{b} = \{x, y, v, c, s, w_{v}, s_{w}, c_w\}
\end{align}
In the appendix, we show how the moments of all of the indeterminant variables except $x$ and $y$ can be computed to arbitrary order, so we initialize $\mathcal{Z}$ to contain the multi-indices corresponding to, for example, $c^n$, up to some large $n\in\N$. The edge set for this system under our assumptions is:
\begin{align}
    E_{\mathbf{b}} = \{&(x, y), (x, v), (y, v), (x, s), (x, c), (y, s), (y, c)\}
\end{align}
After running algorithm (\ref{alg:create_update_relations}), we arrive at the following $\mathcal{Z}(\mathbf{b})$:
\begin{align}
        \mathcal{Z}(\mathbf{b}) = \{xy, xs, ys, xc, yc, xvs, xvc, yvs, yvc\}
\end{align}
The result is that if we derive the dynamics for each moment in $\mathcal{Z}(\mathbf{b})$, that set of equations can be used to recursively compute the moments $\mathcal{Z}(\mathbf{b})$ at each time step using only moments of the initial state distribution and moments of $s_w$ and $c_w$.

\section{Experiments}\label{sec:experiments}
In this section, we demonstrate the performance of our system through two learning-based predictors that predict stochastic position and control for the target agent. For each predictor, we describe its network architecture and training procedure, before presenting risk assessment results. All computations were performed on a desktop with an Intel Core i9-7980XE CPU at 2.60 GHz. All Monte Carlo (MC) methods are implemented with vectorized NumPy operations to have a realistic assessment of run times for naive MC.

\subsection{GMM Position Predictor}
\subsubsection{Model Description}
To obtain probabilistic trajectory distributions of agents, we trained a simple DNN to generate Gaussian mixture model parameters for $\mathbf{x}_{1:T}$ over a fixed horizon $T = 30$ given a sequence of observed positions over $20$ time steps. Although our framework works with any prediction model that outputs GMM parameters for positions, we aim to generate accurate and realistic predictions by selecting an encoder-decoder-based predictor that utilizes long short-term memory (LSTM) units because of the recent success of recurrent neural networks in trajectory prediction on different benchmarks \cite{alahi2016social,huang2019diversity,deo2018multi,cui2019deep}.

\begin{figure}[!b]
    \centering
    \vspace*{-5mm}
    \includegraphics[width=1.0\linewidth]{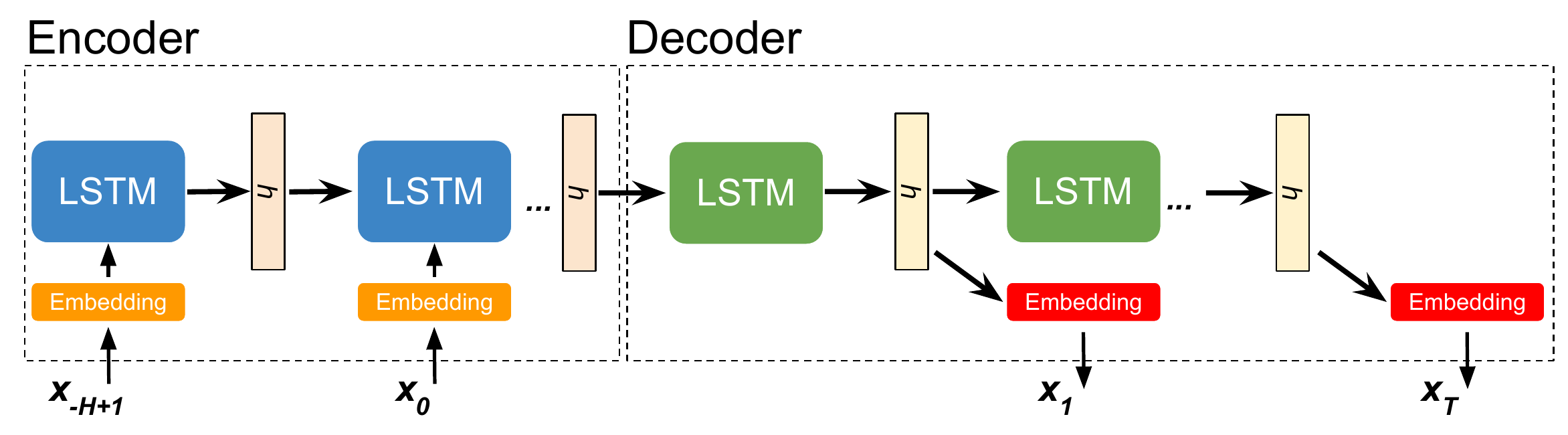}
    \caption{Architecture diagram of GMM position predictor, including an LSTM-based encoder and an LSTM-based decoder. We introduce extra embedding layers to process the observed positions in the encoder and process the predicted hidden states before generating predicted parameters in the decoder.}
    \label{fig:gmm_predictor}
\end{figure}

As shown in Figure~\ref{fig:gmm_predictor}, the encoder is a sequence of LSTM units taking observed trajectories of the target agent as input and outputs a latent vector encoding agent hidden state. The decoder is also a sequence of LSTM units that takes the latent vector and generates a set of GMM position parameters from each LSTM unit. For simplicity, we used three component mixture models. For each component, we generate a mean position vector and a covariance matrix representing uncertainties of predictions. The model is trained and validated on a subset of the Argoverse dataset \cite{chang2019argoverse}.

\subsubsection{Experiments}
On a dataset of 500 scenarios similar to that shown in Figure~\ref{fig:risk_assess_example}, predictions were made and the risk was evaluated along a predefined trajectory for the ego vehicle. To evaluate QFMVG's, we tested both the methods of Imhof and Liu-Tang-Zhang. The methods proposed are much faster than naive Monte Carlo with far lower error.  The method of Imhof with an error tolerance of $10^{-10}$ was used as ground truth \cite{imhof1961computing}. Only $170$ scenarios were used for error computation as results from scenarios with computed ground truth errors within tolerances (i.e: $10^{-10}$) were neglected for error computation. We note that the method of Liu-Tang-Zhang empirically produces results with very small errors while being several times faster than the method of Imhof, which may prove useful in certain contexts.
\begin{table}[]
\caption{Results from evaluating risks in 500 scenarios. Mean time is for evaluating a thirty time step three mode GMM prediction. Errors correspond to the time step with the maximum error.}
\begin{tabular}{|l|l|l|l|}
\hline
\textbf{Method} & \textbf{\begin{tabular}[c]{@{}l@{}}Mean Time \\ (ms)\end{tabular}} & \textbf{\begin{tabular}[c]{@{}l@{}}Mean Max. \\Absolute Error\end{tabular}} & \textbf{\begin{tabular}[c]{@{}l@{}}Mean Max. \\Relative Error\end{tabular}} \\ \hline
Imhof           & $91.21$                   & $0.0$                                                                               & $0.0$                                                                               \\ \hline
Liu-Tang-Zhang  & $26.67$                   & $2.7\times 10^{-6}$                                                                          & $2.3\times 10^{-4}$                                                                           \\ \hline
MC $10^4$ & 106.9                   & $6.7\times 10^{-4}$                                                                           & 0.38                                                                              \\ \hline
MC $5\times 10^4$ & 422.5                   & $2.7\times 10^{-4}$                                                                           & 0.13                                                                              \\ \hline
MC $10^5$ & 1329                    & $1.9\times 10^{-4}$                                                                           & 0.12                                                                              \\ \hline
\end{tabular}
\vspace*{-5mm}
\end{table}

\subsection{GMM Control Predictor}
\subsubsection{Model Description}
We use a similar DNN as the GMM position predictor, but the output becomes instead a set of GMM parameters for control signals defined in \eqref{eq:dynamics}. Again, we assume a fixed number of three components in the mixture model for each control signal for the sake of simplicity. During model training, we obtain the ground truth control sequence by differentiating future positions of the target vehicle. Instead of using the Argoverse dataset, which has noisy differentiated control data due to noise in the perception system used to collect the dataset, we use our own data collected from a naturalistic driving simulator called CARLA \cite{Dosovitskiy17} that provides accurate ground truth control values. The model is trained and validated on 10k samples collected in CARLA.
\begin{figure}[!b]
    \centering
    \vspace*{-5mm}
    \includegraphics[width=0.9\linewidth]{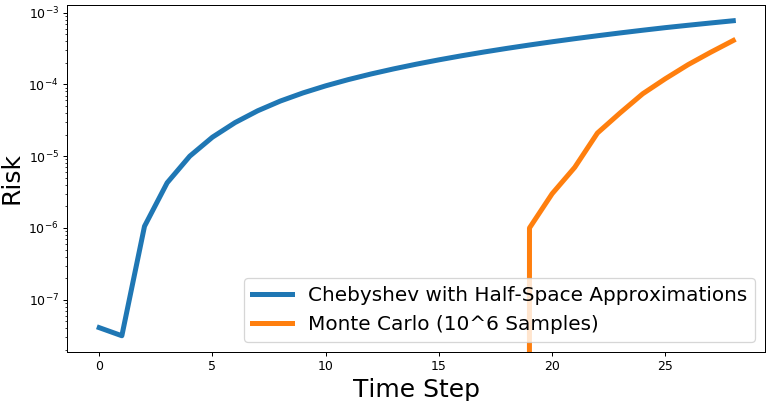}
    \caption{Risk estimates across time for an example scenario using random variables from the GMM control predictor.}
    \label{fig:uncertain_control_chebyshev}
\end{figure}
\subsubsection{TreeRing + Chebyshev Experiments} When deriving expressions for position moments up to order four, \textit{TreeRing} returns 92 polynomial expressions while only 11 are needed to propagate the mean vector and covariance matrix. As these expressions currently require manual transcription into code, a process that is prone to human error, the half-space approximation method with $12$ half-spaces was tested as it requires fewer expressions. The initial state of the agent vehicles was assumed to be known and deterministic. Random variables for control from the DNN and expressions from $\textit{TreeRing}$ were then used to compute the mean and covariance matrix of position at each time step. Over 50 scenarios, the mean time to evaluate the risk for a given trajectory for the Chebyshev method was 80ms while the Monte Carlo method with $10^6$ samples took 140 seconds. The average worst-case conservatism of the Chebyshev risk estimate for a given time step along a trajectory was $0.012$ (assuming the Monte Carlo results represent ground truth). Figure~\ref{fig:uncertain_control_chebyshev} shows the risk for both methods.

\subsubsection{Comparing SOS + Chebyshev}
Experiments were run to test and compare the Chebyshev and SOS methods described in (\ref{chebyshev_quad_form}) and (\ref{sos_subsection}). For this experiment, higher order moments were obtained by automatic differentiation of the MVG moment generating function and the resulting moments of $Q(\mathbf{x}_t)-1$ were normalized. YALMIP was used to transcribe the SOS programs into Semidefinite programs, and SeDuMi was used to solve the resulting semidefinite programs \cite{Lofberg2004, sturm1999using}. We observe that 1) Chebyshev bound produces nearly the same result as the second order SOS program and 2) the SOS program with higher order moments can yield significantly better bounds, especially in the tails. The solve times for each time step only marginally increased for the higher order SOS programs; the mean solve times were 42, 44, and 49 ms for the second, fourth, and sixth order SOS formulations respectively. While these solve times are much better than those often encountered with SOS programs, further advances in performance are needed for this to be used online.
\begin{figure}[!t]
    \centering
    \includegraphics[width=\linewidth]{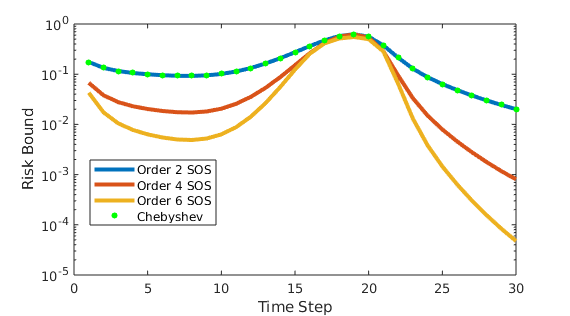}
    \caption{Risk bounds computed with the SOS formulation from section \ref{sos_subsection} compared with Chebyshev's inequality without half-space approximations.}\label{fig:my_label}
    \vspace*{-5mm}
\end{figure}
\section{Conclusions}
Our experimental results with the methods for GMM position models and the Chebyshev method for non-Gaussian position and control models suggest that high performance implementations would be immediately practical for use in online applications. Future work should incorporate risk assessment into motion planning algorithms, but we note that it may be easily incorporated into standard algorithms with ``collision check" primitives such as RRTs and PRMs \cite{rarnop}. The SOS method does significantly improve upon the risk bounds from the Chebyshev method at the cost of additional computation; future research should further develop methods to make the SOS step offline to improve runtimes for online applications \cite{jasour2018moment}. Future work in both prediction and risk assessment should also work towards relaxing assumptions such as time independence.

\section*{Acknowledgements}
This work was supported in part by Boeing grant MIT-BA-GTA-1 and by the Masdar Institute grant 6938857. Allen Wang was supported in part by a NSF Graduate Research Fellowship.

\section{Appendix}
\subsection{Moments and Characteristic Functions of Mixture Models}
Letting $f_X$ denote the pdf of a $K$ component mixture model, $X$, with component pdfs $f_{X_i}, \forall i\in[K]$ and letting $f_Z$ denote the pdf of the $K$ category Multinoulli, by definition $f_X(x) = \sum_{i=1}^m f_{X_i}(x) f_Z(i)$.
For any measurable function, $g$, by interchanging the order of integration and summation
\begin{align}
    \Exp[g(X)] &= \int g(x) f_X(x) dx\\
    &= \sum_{i=1}^K f_Z(i)\int g(x) f_{X_i}(x) dx
\end{align}
By letting $g(X) = X^n$ or $g(X) = e^{itX}$, we have that the moments and characteristic function of $X$ can both be computed as the weighted sum of those of their components.
\subsection{Moments of Trigonometric Variables}
In this section, we show how moments of the form $\mathbb{E}[\cos^n(X)]$, $\mathbb{E}[\sin^n(X)]$, and $\mathbb{E}[\cos^m(X)\sin^n(X)]$ can be computed in terms of the characteristic function of the random variable $X$, $\Phi_{X}$. We begin by applying Euler's Identity to the definition of the characteristic function:
\begin{align}
\begin{split}
    \Phi_{X}(t) &= \mathbb{E}[e^{itX}]\\
    &= \mathbb{E}[\cos(tX)] + i\mathbb{E}[\sin(tX)]
\end{split}
\end{align}
Thus, we have that $\mathbb{E}[\cos(tX)] = \text{Re}(\Phi_{X}(t))$ and $\mathbb{E}[\sin(tX)] = \text{Im}(\Phi_{X}(t))$. This immediately gives us the ability to compute the first moments of our trigonometric random variables. For higher moments, the trigonometric power formulas can be used to express quantities of the form $\cos^n(X)$ as the sum of quantities of the form $\cos(mX)$ where $m\in\N$ and similarly for $\sin^n(X)$ \cite{zwillinger2002crc}. Thus, higher moments of $\sin(X)$ and $\cos(X)$ can be computed using $\Phi_X(t)$.
Moments of the form:
\begin{align}\label{eq:higher_cos_sin}
    \mathbb{E}[\cos^m(X)\sin^n(X)]
\end{align}
can also ultimately be computed in terms of $\Phi_X(t)$. This can be seen if we make the substitutions $\cos(X) = \frac{1}{2}(e^{ix} + e^{-ix})$ and $\sin(X) = \frac{1}{2i}(e^{ix} - e^{-ix})$, then (\ref{eq:higher_cos_sin}) can be expressed as:
\begin{align}
    \mathbb{E}\left[\frac{1}{i^n2^{m + n}}(e^{iX} + e^{-iX})^m(e^{iX} - e^{-iX})^n\right]
\end{align}
By applying the binomial theorem to  both expressions in parentheses, and multipying the resulting expressions, we find the entire expression in the expectation operator can be expressed as a polynomial in $e^{iX}$ and $e^{-iX}$. Thus, the entire expression can be written as the sum of terms of the form $\mathbb{E}[e^{itX}]$ for $t\in\mathbb{Z}$ which is in the definition of $\Phi_X(t)$. In practice, computer algebra packages were used to derive these expressions when needed.
\subsection{Sum of Independent Random Variables}
In the case that our random variable X is a sum of independent random variables $Y_i$ and a constant $c$, $X=c+\sum_{i\in[n]} Y_i$, the characteristic function $\Phi_X$ can be expressed as such:
\begin{align}
    \Phi_X(t)=e^{itc}\prod_{i\in[n]}\Phi_{Y_i}(t)
\end{align}
\bibliographystyle{IEEEtran}
\bibliography{references} 

\end{document}